\renewenvironment{proof}[1][]{\par\noindent{\bf Proof #1\ }}{\hfill\BlackBox\\[2mm]}
\newcommand{\X}{\mathcal X}
\newcommand{\Y}{\mathcal Y}
\newcommand{\F}{\mathcal F}
\renewcommand{\L}{\mathcal L}
\newcommand{\T}{\mathcal T}
\renewcommand{\P}{\mathbb P}
\newcommand{\nats}{\mathbb{N}}
\newcommand{\reals}{\mathbb{R}}
\newcommand{\E}{\mathbb E}
\renewcommand{\limsup}{\mathop{\rm limsup}}
\DeclareSymbolFont{bbold}{U}{bbold}{m}{n}
\DeclareSymbolFontAlphabet{\mathbbold}{bbold}
\newcommand{\ind}{\mathbbold{1}}
\newcommand{\Borel}{{\cal B}}
\newcommand{\ProcX}{\mathbb{X}}
\newcommand{\ProcY}{\mathbb{Y}}
\newcommand{\target}{f^{\star}}
\newcommand{\goodfun}{f_0}
\newcommand{\loss}{\ell}
\newcommand{\SUOL}{{\rm SUOL}}
\newcommand{\ProcSet}{\mathcal{C}}
\newcommand{\ignore}[1]{}
\newcommand{\private}[1]{}
\newcommand{\vast}{\bBigg@{3}}
\newcommand{\Vast}{\bBigg@{4}}
\newsavebox{\savepar}
\newtheorem{condition}{Condition}
\newtheorem{problem}{Open Problem}
\title[Universal Consistency with Dependent Responses]{Universally Consistent Online Learning with \\Arbitrarily Dependent Responses}
\begin{document}

\newlength{\pspace}
\setlength{\pspace}{2mm}

\maketitle

\begin{abstract}%
  This work provides an online learning rule that is universally consistent
  under processes on $(X,Y)$ pairs, under conditions only on the $X$ process.
  As a special case, the conditions admit all processes on $(X,Y)$ such that the process on $X$ is stationary.
  This generalizes past results which required stationarity for the joint process on $(X,Y)$, and additionally
  required this process to be ergodic.  In particular, this means that ergodicity is superfluous for the purpose of
  universally consistent online learning.
\end{abstract}

\begin{keywords}
statistical learning theory,
universal consistency,
nonparametric estimation,
stochastic processes,
stationary processes, 
non-ergodic processes,
online learning
\end{keywords}

\section{Introduction}
\label{sec:intro}

The task of achieving low expected \emph{regret} in online learning is a classic topic in learning theory.  
Specifically, we consider a sequential setting, where at each time $t$, a learner observes a point $X_t$, 
makes a \emph{prediction} $\hat{Y}_t$, and then observes a true \emph{response} $Y_t$: that is, 
$\hat{Y}_t = f_t(X_{1:(t-1)},Y_{1:(t-1)},X_t)$ for some function $f_t$ (possibly randomized).
We are then interested in the rate of growth of the long-run cumulative \emph{loss} of the learner: i.e., 
$\sum_{t=1}^{T} \loss(\hat{Y}_t,Y_t)$, for a given loss function $\loss$.
However, as it may sometimes be impossible to achieve low cumulative loss in an absolute sense, 
we are often interested in understanding the \emph{excess} loss compared to some particular 
\emph{fixed} predictor $\goodfun$: 
i.e., $\sum_{t=1}^{T} \loss(\hat{Y}_t,Y_t) - \sum_{t=1}^{T} \loss(\goodfun(X_t),Y_t)$, 
known as the \emph{regret} (relative to $\goodfun$).

{\vskip \pspace}Several different formulations of the subject have been proposed, leading to different algorithmic approaches 
and theoretical analyses of regret.  For instance, there is a rich theory of online learning with \emph{arbitrary} 
sequences $\{(X_t,Y_t)\}_{t=1}^{\infty}$, but where the reference function $\goodfun$ is restricted to belong to some 
particular function class $\F$ \citep*[see e.g.,][]{cesa-bianchi:06,ben-david:09,rakhlin:15}.  

{\vskip \pspace}On the other hand, 
there has also been significant work on theories that allow \emph{any} reference function $\goodfun$, 
while restricting the sequence $\{(X_t,Y_t)\}_{t=1}^{\infty}$.  This is the subject of the present work.
A classic starting point for this line of work is the theory of \emph{universally consistent} predictors 
under \emph{i.i.d.}\ sequences.  In particular, for binary classification and bounded regression, 
under mild conditions on the value space $\X$ of the $X_t$ variables,  
there are simple learning rules $\hat{f}_t$ satisfying a guarantee that, for any i.i.d.\ sequence $\{(X_t,Y_t)\}_{t=1}^{\infty}$, 
for every measurable function $\goodfun : \X \to \Y$, it holds that   
$\sum_{t=1}^{T} \loss(\hat{Y}_t,Y_t) - \sum_{t=1}^{T} \loss(\goodfun(X_t),Y_t) = o(T)$ almost surely
\citep*[e.g.,][]{stone:77,devroye:96,gyorfi:02,hanneke:21b}.
This result has since been extended to various non-i.i.d.\ conditions on $\{(X_t,Y_t)\}_{t=1}^{\infty}$, 
including the case where $\{(X_t,Y_t)\}_{t=1}^{\infty}$ is any \emph{stationary ergodic} process 
\citep*[e.g.,][]{morvai:96,gyorfi:99,gyorfi:02}, or generally satisfies a 
\emph{law of large numbers} \citep*{morvai:99,steinwart:09}.

{\vskip \pspace}In the present work, we are interested in exploring \emph{weakened} assumptions on $\{(X_t,Y_t)\}_{t=1}^{\infty}$ 
such that there still exist learners that are consistent for all $\{(X_t,Y_t)\}_{t=1}^{\infty}$ satisfying the assumption.

{\vskip \pspace}The recent work of \citep*{hanneke:21} focuses on the input sequence $\{X_t\}_{t=1}^{\infty}$, 
investigating the \emph{minimal} assumption for this sequence to admit consistent learners 
under the restriction that $Y_t = \target(X_t)$ for an arbitrary fixed function $\target$ 
(or in some cases, more generally with random variables $Y_t$, where $\E[\loss(\target(X_t),Y_t)|X_t]$ is minimal, and 
the $Y_t$ sequence is conditionally independent given the respective $X_t$ variables).
While that work identifies such provably-minimal assumptions for related settings (namely, inductive and self-adaptive learning), 
for the setting of online learning it only establishes a necessary condition and a sufficient condition which 
are provably distinct, leaving open the problem of exactly identifying the precise minimal condition admitting 
consistent learners \citep*{hanneke:21c}.

{\vskip \pspace}Nevertheless, the sufficient condition (Condition~\ref{con:kc} below) established by \citep*{hanneke:21} is 
quite general, encompassing all previously-studied conditions on $\{X_t\}_{t=1}^{\infty}$ admitting universal consistency.
It is therefore interesting to investigate whether this condition remains sufficient for universal consistency \emph{without} restricting to 
deterministic responses $Y_t =\target(X_t)$ (or conditional independence of $Y_t$ values).  
In particular, it is most interesting to understand whether there is a 
family of processes $\{(X_t,Y_t)\}_{t=1}^{\infty}$ admitting universally consistent learners, 
and encompassing \emph{all} previously-studied families admitting 
such consistent learners.

{\vskip \pspace}In this work, we propose such a family.  Specifically, the family consists of all $\{(X_t,Y_t)\}_{t=1}^{\infty}$ where 
$\{X_t\}_{t=1}^{\infty}$ satisfies the condition suggested by \citep*{hanneke:21} (Condition~\ref{con:kc} below).  
In particular, within this family, the sequence $\{Y_t\}_{t=1}^{\infty}$ is \emph{completely unrestricted}.  
We establish this result for all bounded separable metric losses $\loss$.

{\vskip \pspace}As an interesting implication, while past works established that there are universally consistent learners 
for the family of all stationary ergodic processes \citep*{morvai:96,gyorfi:99,gyorfi:02}, 
here we find that the requirement of ergodicity is completely \emph{superfluous}: that is, 
\emph{stationarity alone} is already sufficient for universally consistent online learning.
Indeed, it even suffices if only the \emph{input} sequence $\{X_t\}_{t=1}^{\infty}$ is stationary, 
while the response sequence $\{Y_t\}_{t=1}^{\infty}$ can be essentially \emph{arbitrary}.
As discussed below, the proposed family of processes also encompasses interesting families of \emph{non-stationary} 
processes previously studied in the literature \citep*[e.g.,][]{ryabko:06}.

{\vskip \pspace}The algorithm achieving this result 
essentially uses an infinite variant of the Hedge algorithm for learning with expert advice, 
applied to a countable set of functions, which were shown by \citep*{hanneke:21} to be \emph{dense}, 
in an appropriate sense defined relative to long-run averages observed in any process $\ProcX$ satisfying 
a key condition (Condition~\ref{con:kc} below).
A related approach has previously been shown to be consistent under stationary ergodic processes \citep*{gyorfi:99}.

\section{Formal Setup and Main Result}
\label{sec:main}

Following the setup from \citep*{hanneke:21}, 
we suppose $\X$ is a non-empty set, equipped with a separable metrizable topology $\T$,
and $\Borel$ denotes the Borel $\sigma$-algebra generated by $\T$, defining the measurable subsets of $\X$.
We allow $(\Y,\loss)$ to be any non-empty bounded separable metric space, where for simplicity we suppose $\sup_{y,y' \in \Y} \loss(y,y') \leq 1$.
For instance, this covers the classification setting, where $\Y$ is any non-empty set and $\loss(y,y') = \ind[ y \neq y' ]$, 
as well as regression on $\Y = [0,1]$ with the absolute loss $\loss(y,y') = |y-y'|$.

{\vskip \pspace}We will be interested in stochastic processes $(\ProcX,\ProcY) := \{ (X_t,Y_t) \}_{t=1}^{\infty}$ where each $(X_t,Y_t)$ takes values in $\X \times \Y$.
Following \citep*{hanneke:21}, for any measurable $A \subseteq \X$, define
$\hat{\mu}_{\ProcX}(A) = \limsup\limits_{n \to \infty} \frac{1}{n} \sum_{t=1}^{n} \ind_{A}(X_t)$.
Then we consider the following condition from \citep*{hanneke:21}.

\begin{condition}[\citealp*{hanneke:21}]
  \label{con:kc}
  For every monotone sequence $\{A_k\}_{k=1}^{\infty}$ of measurable subsets of $\X$ with $A_k \downarrow \emptyset$,
  \begin{equation*}
    \lim\limits_{k \to \infty} \E\!\left[ \hat{\mu}_{\ProcX}(A_k) \right] = 0.
  \end{equation*}
\end{condition}

As discussed by \citep*{hanneke:21}, this condition stipulates that $\E\!\left[\hat{\mu}_{\ProcX}(\cdot)\right]$ should behave as a \emph{continuous submeasure}.
For instance, \citep*{hanneke:21} shows that every process $\ProcX$ with convergent relative frequencies satisfies Condition~\ref{con:kc},
which includes all \emph{stationary} processes, or even asymptotically mean stationary processes; indeed, in these cases, 
Birkhoff's ergodic theorem implies that $\E\!\left[\hat{\mu}_{\ProcX}(\cdot)\right]$ behaves as a \emph{probability measure}, and hence is always continuous in the above sense.
\citep*{hanneke:21} also argues that many interesting completely non-stationary processes satsify the condition as well.
Moreover, note that ergodicity is not needed for this to hold.
Condition~\ref{con:kc} will arise in our analysis as supplying the existence of a countable set $\tilde{\F}$ of functions 
which are \emph{dense} in an appropriate sense.

{\vskip \pspace}We consider \emph{online learning rules}: that is, sequences of measurable functions $f_t : \X^{t-1} \times \Y^{t-1} \times \X \to \Y$, 
which in general may be \emph{randomized} (where the internal randomness should be independent from the data sequence). 
In this general setting, for any measurable $\goodfun : \X \to \Y$,
for any process $(\ProcX,\ProcY) = \{(X_{t},Y_{t})\}_{t=1}^{\infty}$ on $\X \times \Y$,
for any online learning rule $f_t$, 
we define the long-run average excess loss
\begin{equation*}
  \hat{\L}_{(\ProcX,\ProcY)}(f_{\cdot};\goodfun) = \limsup\limits_{T \to \infty} \frac{1}{T} \sum\limits_{t=1}^{n} \left( \loss\!\left( f_{t}(X_{1:(t-1)}, Y_{1:(t-1)}, X_{t}), Y_{t} \right) - \loss\!\left(\goodfun(X_{t}),Y_{t}\right) \right).
\end{equation*}
We are then interested in online learning rules $f_{t}$ satisfying the guarantee that, for all measurable functions $\goodfun : \X \to \Y$,
it holds that $\hat{\L}_{(\ProcX,\ProcY)}(f_{\cdot};\goodfun) \leq 0$ almost surely,
for every $(\ProcX,\ProcY)$ in some specific family of processes $\ProcSet$.
Such an online learning rule $f_t$ is said to be \emph{strongly universally consistent} for $\ProcSet$.
The main result is the following.

\begin{theorem}
\label{thm:online-kc}
There is an online learning rule that is strongly universally consistent for the set of all processes $(\ProcX,\ProcY)$ such that $\ProcX$ satisfies Condition~\ref{con:kc}.
\end{theorem}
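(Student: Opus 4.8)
The plan is to combine the density property of the countable function class $\tilde{\F}$ supplied by Condition~\ref{con:kc} with a regret bound for an infinite-expert aggregation rule, exploiting the key observation that the aggregation rule's regret guarantee holds against arbitrary adversarial response sequences and so does not care how $\ProcY$ is generated. First I would invoke the result from \citep*{hanneke:21}: under Condition~\ref{con:kc} on $\ProcX$, there exists a countable set $\tilde{\F} = \{\dfunc_i\}_{i=1}^{\infty}$ of measurable functions $\X \to \Y$ that is \emph{dense} in the sense that, for every measurable $\goodfun : \X \to \Y$ and every $\eps > 0$, there is some $\dfunc_i$ with $\hat{\mu}_{\ProcX}\big(\{x : \loss(\dfunc_i(x),\goodfun(x)) > \eps\}\big) \le \eps$ (in expectation, or almost surely after passing to a suitable version). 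I would then take the online learning rule to be an infinite-expert Hedge/exponentially-weighted-average forecaster run over the experts $\dfunc_i$, with a prior weight $w_i > 0$ summing to one (e.g.\ $w_i = 2^{-i}$) and a suitably decaying learning rate $\eta_T$; the predictions $\hat{Y}_t$ are the randomized draws (or, since $(\Y,\loss)$ is a bounded separable metric space, an appropriate metric-space analogue of the convex aggregation — drawing an expert index from the current Hedge distribution suffices and keeps us inside $\Y$).

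The second step is the regret analysis of this rule. A standard Hedge argument with nonuniform priors gives, for \emph{every} sequence $\{(x_t,y_t)\}$ and \emph{every} expert $i$,
\begin{equation*}
  \sum_{t=1}^{T} \loss(\hat{Y}_t,y_t) - \sum_{t=1}^{T} \loss(\dfunc_i(x_t),y_t) \le \frac{\ln(1/w_i)}{\eta_T} + \frac{\eta_T T}{8} + (\text{a martingale fluctuation term from randomization}),
\end{equation*}
so with $\eta_T \asymp \sqrt{\ln(1/w_i)/T}$ — or, to make the bound uniform in $i$, a doubling/time-varying choice $\eta_t \asymp 1/\sqrt{t}$ — the normalized regret against expert $i$ is $O\big(\sqrt{\ln(1/w_i)/T}\big) \to 0$. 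The randomization fluctuation is controlled by the Hoeffding--Azuma inequality together with a Borel--Cantelli argument along the subsequence $T = 2^m$ (and monotonicity to fill in), using boundedness of $\loss$ by $1$; this is why we get the almost-sure, not just in-expectation, statement. Consequently, almost surely,
\begin{equation*}
  \limsup_{T\to\infty} \frac{1}{T}\sum_{t=1}^{T}\Big(\loss(\hat{Y}_t,Y_t) - \loss(\dfunc_i(X_t),Y_t)\Big) \le 0 \quad \text{for every } i.
\end{equation*}

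The third step bridges from the $\dfunc_i$ to an arbitrary competitor $\goodfun$. Fix measurable $\goodfun$ and $\eps>0$, pick $\dfunc_i$ $\eps$-close to $\goodfun$ in the above density sense, and write
\begin{equation*}
  \frac{1}{T}\sum_{t=1}^{T}\loss(\dfunc_i(X_t),Y_t) - \frac{1}{T}\sum_{t=1}^{T}\loss(\goodfun(X_t),Y_t) \le \frac{1}{T}\sum_{t=1}^{T}\loss(\dfunc_i(X_t),\goodfun(X_t)),
\end{equation*}
which requires only the triangle inequality for $\loss$ (legitimate since $\loss$ is a metric on $\Y$ — this is precisely where the metric, rather than merely loss-like, structure is used). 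The right-hand side is at most $\eps + \hat{\mu}_{\ProcX}(\{x : \loss(\dfunc_i(x),\goodfun(x))>\eps\})\cdot 1$ in $\limsup$, which is $\le 2\eps$ by the density property; combining with Step~2, $\hat{\L}_{(\ProcX,\ProcY)}(f_\cdot;\goodfun)\le 2\eps$ a.s.\ for each fixed $\eps$, and letting $\eps\downarrow 0$ along a countable sequence finishes the proof.

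The main obstacle I anticipate is Step~3's interaction with the almost-sure quantifiers and with the fact that $\hat{\mu}_{\ProcX}$ is a $\limsup$ of empirical frequencies rather than a genuine measure: the density guarantee from \citep*{hanneke:21} is naturally phrased in terms of $\E[\hat{\mu}_{\ProcX}(\cdot)]$, so I must either (i) extract an almost-sure version — e.g.\ showing that for a suitably chosen countable dense family the relevant bad-event frequencies are simultaneously small a.s.\ — or (ii) run the density selection of $\dfunc_i$ depending on the realization, which is delicate because the competitor $\goodfun$ is fixed in advance while the "good" index $i$ may be sample-path-dependent; reconciling this with the "for all $\goodfun$, a.s." order of quantifiers (rather than "a.s., for all $\goodfun$") is the crux. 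A secondary technical point is making the infinite-expert Hedge forecaster output legitimate $\Y$-valued predictions when $\Y$ is only a bounded separable metric space with no convex structure; drawing an expert index from the Hedge weights resolves this but shifts some work onto the randomization concentration in Step~2.
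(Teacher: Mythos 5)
Your high-level strategy matches the paper's exactly: aggregate a countable family $\tilde{\F}$ that is dense in the $\hat{\mu}_{\ProcX}$ sense (Lemma~24 of \citealp*{hanneke:21}) via an infinite-expert Hedge rule, then bridge from the experts to an arbitrary $\goodfun$ by the triangle inequality for the metric loss. The one genuine technical difference is the Hedge construction: you use a single forecaster with nonuniform prior weights $w_i = 2^{-i}$ and a time-varying rate, getting normalized regret $O\bigl((\ln(1/w_i)+1)/\sqrt{T}\bigr)$ against each fixed $i$ simultaneously; the paper instead runs a block-based scheme in which block $T_j$ hosts Hedge on the first $j$ experts with a block-local rate (Corollary~\ref{cor:infinite-hedge}), yielding a bound only against the first $n^{1/4}$ experts at time $n$, which then forces the extra diagonalization step via Lemma~\ref{lem:array-convergence}. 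If the anytime nonuniform-prior Hedge bound you cite is proved carefully (it is not entirely off-the-shelf, and you would still need the Hoeffding--Azuma plus Borel--Cantelli argument you mention for the randomization), your route would actually be a bit leaner, since you could skip the diagonalization lemma entirely: a countable intersection over $i$ gives regret $\le 0$ against every expert almost surely, and then you only need the approximation term to vanish.

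On the ``crux'' you flag: you are right that the only nontrivial remaining step is upgrading $\E\bigl[\hat{\mu}_{\ProcX}(\loss(\tilde{f}_{i_k},\goodfun))\bigr] \to 0$ to an almost-sure statement, and your option (i) is exactly what the paper does --- choose $i_k$ (deterministically, depending only on $\goodfun$) so that the expectation is $< 2^{-2k}$, apply Markov to get $\hat{\mu}_{\ProcX} < 2^{-k}$ with probability $\ge 1-2^{-k}$, and conclude by Borel--Cantelli that $\hat{\mu}_{\ProcX}(\loss(\tilde{f}_{i_k},\goodfun)) \to 0$ almost surely along $k$. Your secondary worry about the quantifier order is actually misplaced: the required order ``for every $\goodfun$, a.s.'' is the \emph{favorable} one here, precisely because it lets you fix $\goodfun$ and select the subsequence $\{i_k\}$ before invoking any probability statement; option (ii)'s sample-path-dependent index selection is also harmless once the regret holds a.s.\ simultaneously for all $i$. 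So the obstacle you identify is real but has a clean resolution, and your sketch points at it; the paper just makes it concrete.
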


In particular, this has the following immediate corollary.

\begin{corollary}
  \label{cor:online-stationary}
 There is an online learning rule that is strongly universally consistent for the set of all processes $(\ProcX,\ProcY)$ such that $\ProcX$ is stationary.
\end{corollary}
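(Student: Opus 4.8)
Corollary~\ref{cor:online-stationary} follows at once from Theorem~\ref{thm:online-kc}, since, as noted above, every stationary $\ProcX$ (and more generally every $\ProcX$ with convergent relative frequencies) satisfies Condition~\ref{con:kc}; so the substance is Theorem~\ref{thm:online-kc}, for which the plan is the following. Run a randomized exponential-weights (Hedge) algorithm over the countable family of functions furnished by Condition~\ref{con:kc}, and then convert the textbook Hedge regret bound against each individual expert into the desired guarantee against an arbitrary measurable $\goodfun$, using the triangle inequality for $\loss$, a martingale concentration argument for the internal randomization, and a Borel--Cantelli argument to pass from an ``in expectation'' density statement to an almost sure conclusion. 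Concretely, the first step is to invoke the dense-set construction of \citep*{hanneke:21}: since $\ProcX$ satisfies Condition~\ref{con:kc}, there is a single countable set $\tilde{\F} = \{\tilde{f}_{i}\}_{i \in \nats}$ of measurable functions $\X \to \Y$, depending neither on $\goodfun$ nor on the particular admissible $\ProcX$, such that for every measurable $\goodfun : \X \to \Y$ and every $\eps > 0$ there is $\tilde{f}_{i} \in \tilde{\F}$ with $\E\big[ \limsup_{T} \frac{1}{T}\sum_{t=1}^{T} \loss(\tilde{f}_{i}(X_{t}),\goodfun(X_{t})) \big] < \eps$ (this follows from density of $\tilde{\F}$ with respect to the continuous submeasure $\E[\hat{\mu}_{\ProcX}(\cdot)]$, and is the one and only place the hypothesis on $\ProcX$ enters). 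The learning rule then maintains Hedge weights $p_{t} = (p_{t,i})_{i \in \nats}$ over $\tilde{\F}$ with a strictly positive prior $w_{i} = 2^{-i}$ and a nonincreasing rate $\eta_{t} \asymp t^{-1/2}$, where $p_{t}$ depends only on the past loss vectors $\ell_{s} = (\loss(\tilde{f}_{i}(X_{s}),Y_{s}))_{i}$, $s<t$; it draws $I_{t} \sim p_{t}$ from fresh randomness independent of the data and predicts $\hat{Y}_{t} = \tilde{f}_{I_{t}}(X_{t})$, and after $Y_{t}$ is revealed it performs the usual multiplicative update with $\ell_{t}$.

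The second step assembles three standard facts. (i) The exponential-weights analysis with a nonincreasing rate gives, deterministically in the realized losses and for every fixed $i$, $\sum_{t=1}^{T}\langle p_{t},\ell_{t}\rangle \le \sum_{t=1}^{T}\loss(\tilde{f}_{i}(X_{t}),Y_{t}) + O\big(\sqrt{T}\,\ln(1/w_{i})\big)$; countably many experts cost only the finite (but $i$-dependent) factor $\ln(1/w_{i})$, and the additive term is $o(T)$ for each fixed $i$. (ii) With respect to the filtration generated by the data and the internal randomization, $\loss(\hat{Y}_{t},Y_{t}) - \langle p_{t},\ell_{t}\rangle$ is a martingale difference sequence with values in $[-1,1]$, since $\langle p_{t},\ell_{t}\rangle = \E[\loss(\hat{Y}_{t},Y_{t}) \mid \mathcal{F}_{t-1}]$ (using that $I_{t}$ is conditionally independent of $(X_{t},Y_{t})$ given the past, with conditional law $p_{t}$); so Azuma--Hoeffding together with Borel--Cantelli gives $\frac{1}{T}\sum_{t=1}^{T}(\loss(\hat{Y}_{t},Y_{t}) - \langle p_{t},\ell_{t}\rangle) \to 0$ almost surely. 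Combining (i) and (ii), for every fixed $i$, almost surely $\sum_{t=1}^{T}\loss(\hat{Y}_{t},Y_{t}) \le \sum_{t=1}^{T}\loss(\tilde{f}_{i}(X_{t}),Y_{t}) + o(T)$. (iii) The triangle inequality for $\loss$ gives $\loss(\tilde{f}_{i}(X_{t}),Y_{t}) - \loss(\goodfun(X_{t}),Y_{t}) \le \loss(\tilde{f}_{i}(X_{t}),\goodfun(X_{t}))$; dividing by $T$ and taking $\limsup$, the displayed bound from (i)--(ii) yields that for every fixed $i$, almost surely $\hat{\L}_{(\ProcX,\ProcY)}(f_{\cdot};\goodfun) \le \limsup_{T}\frac{1}{T}\sum_{t=1}^{T}\loss(\tilde{f}_{i}(X_{t}),\goodfun(X_{t}))$.

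For the last step, fix any measurable $\goodfun$ and use the density statement to choose $\{\tilde{f}_{i_{k}}\}_{k \in \nats} \subseteq \tilde{\F}$ with $\E\big[\limsup_{T}\frac{1}{T}\sum_{t=1}^{T}\loss(\tilde{f}_{i_{k}}(X_{t}),\goodfun(X_{t}))\big] \le 2^{-k}$. Then $\sum_{k}\limsup_{T}\frac{1}{T}\sum_{t=1}^{T}\loss(\tilde{f}_{i_{k}}(X_{t}),\goodfun(X_{t}))$ has finite expectation and is therefore finite almost surely, so its $k$th summand tends to $0$; intersecting this event with the countably many almost sure events from the previous step gives $\hat{\L}_{(\ProcX,\ProcY)}(f_{\cdot};\goodfun) \le \inf_{k}\limsup_{T}\frac{1}{T}\sum_{t=1}^{T}\loss(\tilde{f}_{i_{k}}(X_{t}),\goodfun(X_{t})) = 0$ almost surely. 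Since $\goodfun$ was arbitrary, $f_{t}$ is strongly universally consistent for the family in Theorem~\ref{thm:online-kc}. I expect the main obstacle to be conceptual rather than computational: Condition~\ref{con:kc} constrains only the expectation $\E[\hat{\mu}_{\ProcX}(\cdot)]$ (which need not behave as a submeasure pathwise), so density of $\tilde{\F}$ is available only in expectation, while the theorem demands an almost sure statement — it is the summable-expectations/Borel--Cantelli step above (and its analogue for the internal randomization) that bridges this gap. By comparison, making the infinite-expert Hedge bound genuinely anytime and uniformly $o(T)$ against each fixed expert, and verifying measurability of the randomized rule, are routine.
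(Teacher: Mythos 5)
Your reduction of the Corollary to Theorem~\ref{thm:online-kc} is exactly the paper's (the paper simply observes that stationary $\ProcX$ satisfies Condition~\ref{con:kc} and declares it an immediate corollary). Where you diverge is in the proof of Theorem~\ref{thm:online-kc} itself, and your route is genuinely different from the paper's, although both are correct. The paper's infinite-Hedge reduction (Corollary~\ref{cor:infinite-hedge}) works in epochs: it partitions time into blocks $T_j$ of length $j$, runs a fresh finite Hedge with a \emph{uniform} prior over the first $j$ experts inside block $j$, and obtains a regret bound of the form $\min_{1\leq i\leq n^{1/4}}(\cdot) + 19\,n^{3/4}\sqrt{\ln n} + \hat n$, where the competitor set $\{1,\dots,\lfloor n^{1/4}\rfloor\}$ grows with $n$ but the additive term is uniform over it. Because the surviving bound involves a \emph{growing} set of experts and a \emph{single} $o(n)$ error term, the paper then needs its Lemma~\ref{lem:array-convergence} (the array-convergence lemma) to interleave the growth of the competitor set with the decay of the density error: it selects $k_n\to\infty$ slowly enough that $\max\{i_1,\dots,i_{k_n}\}\le n^{1/4}$, then applies the lemma to produce a sequence $j_n\le k_n$ with $\beta_{j_n,n}\to 0$. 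You instead run a single prior-weighted Hedge over all experts at once, with a nonincreasing rate, so the regret bound against a \emph{fixed} expert $i$ is $O(\sqrt{T}\ln(1/w_i))=o(T)$ with an $i$-dependent constant; you then take a countable intersection of the per-expert almost-sure events, which entirely sidesteps Lemma~\ref{lem:array-convergence} and the epoch construction. Your handling of the density statement also differs slightly: rather than Markov plus Borel--Cantelli as in the paper, you observe that $\sum_k \hat{\mu}_{\ProcX}(\loss(\tilde f_{i_k},\goodfun))$ has finite expectation, hence is finite a.s., hence the $k$th term vanishes. Your version trades the paper's explicit anytime bound over a growing prefix of experts for a cleaner per-expert anytime bound, and the array-convergence lemma for a plain countable intersection; the price is that you need the (standard but not always written out) analysis of prior-weighted Hedge with a countable prior and a time-varying rate, which the paper avoids by restarting on blocks. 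One small thing you should spell out if writing this in full: the martingale used for Azuma--Hoeffding concentrates the cumulative randomized loss around $\sum_t\langle p_t,\ell_t\rangle$ only after conditioning on the data sequence, so the Borel--Cantelli event there is over the internal randomness conditionally on $(\ProcX,\ProcY)$; this is fine, but worth saying explicitly since the unconditional regret guarantee in the paper's Lemma~\ref{lem:hedge} is stated without this conditioning.
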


\paragraph{Relation to prior results} 
As discussed above, past works have established universal consistency for the set of all processes $(\ProcX,\ProcY)$ that are stationary \emph{and ergodic}
\citep*[e.g.,][]{morvai:99,gyorfi:99,gyorfi:02}, or otherwise satisfy a law of large numbers \citep*{morvai:99,steinwart:09}.  
The work of \citet*{ryabko:06} established universal consistency of certain learning rules for classification,
under a family of processes that allow the $Y_t$ sequence to be \emph{arbitrary}, but restrict the $X_t$ sequence to be 
conditionally i.i.d.\ given a $Y_t$ value, and respecting that $Y_t$ is determined by $X_t$.
\citep*{hanneke:21} has shown that any process $\ProcX = \{X_t\}_{t=1}^{\infty}$ satisfying the condition from \citet*{ryabko:06} 
also satisfies Condition~\ref{con:kc}.
\citep*{hanneke:21} established universal consistency for $\ProcX$ satisfying Condition~\ref{con:kc},
when $Y_t = \target(X_t)$ for a fixed (arbitrary, unknown) measurable function $\target : \X \to \Y$ 
(or more-generally, when $\E[\loss(\target(X_t),Y_t)|X_t]$ is minimal, and the $Y_t$ variables are conditionally independent 
given their respective $X_t$ variables).
We note that Theorem~\ref{thm:online-kc} unifies and generalizes all of these prior results on families of processes admitting universally consistent learners.
Another interesting work is that of \citet*{kulkarni:02}, 
which restricts the $Y_t$ sequence to be conditionally independent with \emph{continuous} conditional mean $\target(X_t)$, 
but allows the $X_t$ sequence to be \emph{arbitrary}.  They establish consistency of certain learning rules under these conditions.
That work is, in a certain sense, \emph{dual} to the present work, as we seek restrictions on the $X_t$ sequence while allowing the $Y_t$ 
sequence to be arbitrary.  As such, the results cannot be directly compared.  
See \citep*{hanneke:21} for a thorough summary of past work on universal consistency for general families of stochastic processes.

\section{Proof of the Theorem}
\label{sec:proof}

The essential approach is to apply an infinite variant of the Hedge algorithm for learning with expert advice, 
where the experts are given by a well-chosen countable set of measurable functions which are ``dense'' in an appropriate sense.
This is similar in spirit to certain existing strategies known to be consistent under stationary ergodic processes \citep*{gyorfi:99}.

{\vskip \pspace}Before proving Theorem~\ref{thm:online-kc}, we introduce some useful results from the literature.
We first state a well-known result from the literature on prediction with expert advice 
\citep*{cesa-bianchi:06,hannan:57,vovk:90,vovk:92,littlestone:94,freund:97b,cesa-bianchi:97,kivinen:99,singer:99,gyorfi:02a}.
See Corollary~4.2 of \citep*{cesa-bianchi:06} for this specific result.

\begin{lemma}
\label{lem:hedge}
Fix $N,n \in \nats$.
Let $y_1,y_2,\ldots$ be a sequence in $\Y$.
For each $t \in \{1,\ldots,n\}$, let $\{z_{t,i}\}_{i=1}^{N}$ be a sequence of values in $\Y$.
Let $\eta = \sqrt{(8/n)\ln(N)}$.
For each $t \in \nats$ and $i \in \{1,\ldots,N\}$,
define $L_{t,i} = \frac{1}{t} \sum\limits_{s=1}^{t} \loss(z_{s,i},y_s)$.
Then for each $i \in \{1,\ldots,N\}$, define $w_{1,i} = v_{1,i} = 1/N$,
and for each $t \in \{2,\ldots,n\}$, define
$w_{t,i} = (1/N) e^{- \eta (t-1) L_{(t-1),i} }$,
and $v_{t,i} = w_{t,i} / \sum\limits_{j=1}^{N} w_{t,j}$.
Finally, let $\{ \hat{z}_{t} \}_{t \in \nats}$ be independent $\Y$-valued random variables, 
with $\P( \hat{z}_{t} = z_{t,i} ) = v_{t,i}$ for each $i \in \{1,\ldots,N\}$ and $t \in \nats$: 
that is, $\hat{z}_{t}$ is a random sample from the $v_{t,i}$-weighted distribution on $z_{t,i}$ values.
Then, for any fixed $\delta \in (0,1)$, 
with probability at least $1-\delta$, it holds that 
\begin{equation*}
\sum_{t=1}^{n} \loss(\hat{z}_{t},y_t) \leq \min_{i \in \{1,\ldots,N\}} n L_{n,i} + \sqrt{(1/2) n \ln(N)} + \sqrt{(1/2) n \ln\!\left(\frac{1}{\delta}\right)}.
\end{equation*}
\end{lemma}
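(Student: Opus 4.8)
The plan is to follow the classical two-part analysis of randomized exponential-weights forecasting: first, a deterministic potential argument bounding the cumulative \emph{mixture loss} $\tilde{\loss}_t := \sum_{i=1}^{N} v_{t,i}\,\loss(z_{t,i},y_t)$ against the best expert; and second, a concentration step controlling the gap between the realized loss $\sum_{t=1}^{n}\loss(\hat{z}_t,y_t)$ and the mixture loss $\sum_{t=1}^{n}\tilde{\loss}_t$, exploiting that $\hat{z}_t$ is an independent draw from the $v_{t,\cdot}$-weighted distribution over $\{z_{t,i}\}_{i=1}^{N}$. A point to keep in mind is that the sequences $\{y_t\}$ and $\{z_{t,i}\}$ are fixed in the lemma statement, so the $v_{t,i}$ and $\tilde{\loss}_t$ are deterministic and the only randomness is in the mutually independent $\hat{z}_t$; this keeps both steps clean and, in particular, makes plain Hoeffding (rather than Azuma--Hoeffding) suffice in the second step.

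For the first step, introduce potentials $\Phi_0 = N$ and $\Phi_t = \sum_{i=1}^{N} \exp\!\big(-\eta \sum_{s=1}^{t} \loss(z_{s,i},y_s)\big)$ for $t \geq 1$; since $w_{t,i} = (1/N)\exp\!\big(-\eta \sum_{s=1}^{t-1}\loss(z_{s,i},y_s)\big)$ (note $(t-1)L_{(t-1),i}$ is the \emph{cumulative}, not average, loss of expert $i$), the $1/N$ cancels in the normalization and one gets $\Phi_{t}/\Phi_{t-1} = \sum_{i=1}^{N} v_{t,i}\, e^{-\eta \loss(z_{t,i},y_t)}$. Applying Hoeffding's lemma to the $[0,1]$-valued random variable equal to $\loss(z_{t,i},y_t)$ with probability $v_{t,i}$ gives $\ln(\Phi_t/\Phi_{t-1}) \leq -\eta\, \tilde{\loss}_t + \eta^2/8$. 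Summing over $t = 1,\dots,n$, using $\ln\Phi_0 = \ln N$ and $\ln\Phi_n \geq -\eta \min_{i} \sum_{s=1}^{n}\loss(z_{s,i},y_s) = -\eta \min_{i} n L_{n,i}$ (each summand in $\Phi_n$ is at least the largest one), and rearranging yields
\[
\sum_{t=1}^{n} \tilde{\loss}_t \;\leq\; \min_{i} n L_{n,i} + \frac{\ln N}{\eta} + \frac{n\eta}{8}.
\]
Substituting $\eta = \sqrt{(8/n)\ln N}$ makes both error terms equal to $\sqrt{(1/8)\,n\ln N}$, summing to $\sqrt{(1/2)\,n\ln N}$.

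For the second step, $\{\loss(\hat{z}_t,y_t) - \tilde{\loss}_t\}_{t=1}^{n}$ is a sequence of independent, mean-zero random variables, each taking values in an interval of length at most $1$ since $\loss$ is $[0,1]$-valued. Hoeffding's inequality then gives $\P\!\big(\sum_{t=1}^{n}(\loss(\hat{z}_t,y_t) - \tilde{\loss}_t) > \lambda\big) \leq e^{-2\lambda^2/n}$; setting the right side equal to $\delta$ gives $\lambda = \sqrt{(1/2)\,n\ln(1/\delta)}$, and combining with the first step yields the claimed bound with probability at least $1-\delta$.

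There is no genuine obstacle here — this is a textbook result (Corollary~4.2 of \citep*{cesa-bianchi:06}) — so the only work is careful bookkeeping: matching the statement's indexing of $w_{t,i}$ and $v_{t,i}$ and the cumulative-loss exponent, checking that the two $\eta$-dependent terms coincide under the prescribed tuning, and confirming that the randomness structure makes the increments in the concentration step genuinely independent and bounded.
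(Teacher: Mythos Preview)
Your proof is correct and follows the standard exponential-weights potential argument combined with Hoeffding's inequality, which is precisely the route taken in \citep*{cesa-bianchi:06}. The paper itself does not give a proof of this lemma; it simply cites Corollary~4.2 of \citep*{cesa-bianchi:06}, so there is nothing further to compare.
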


{\vskip 1mm}\noindent In particular, this has the following implication for learning with an infinite number of experts.

\begin{corollary}
\label{cor:infinite-hedge}
Let $y_1,y_2,\ldots$ be a sequence in $\Y$ (possibly stochastic).
For each $t \in \nats$, let $\{z_{t,i}\}_{i=1}^{\infty}$ be a sequence of values in $\Y$ (possibly stochastic).
Let $t_1 = 1$ and $T_1 = \{1\}$, and inductively define $t_{j+1} = t_j + j$ and $T_{j+1} = \{t_{j+1},\ldots,t_{j+1} + j\}$ for each $j \in \nats$.
Also for each $j \in \nats$, define $\eta_j = \sqrt{(8 / j) \ln(j)}$.
For each $j \in \nats$ and $t \in T_j$, for $i \in \{1,\ldots,j\}$ define $L_{t,i} = \frac{1}{t-t_j+1} \sum\limits_{s=t_j}^{t} \loss(z_{s,i},y_s)$.
For each $i \in \{1,\ldots,j\}$, define $w_{t_j,i} = v_{t_j,i} = 1/j$,
and for each $t \in T_j \setminus \{t_j\}$, define $w_{t,i} = (1/j) e^{-\eta_j (t-t_j) L_{t-1,i}}$,
and $v_{t,i} = w_{t,i} / \sum\limits_{i' = 1}^{j} w_{t,i'}$.
Finally, let $\{ \hat{z}_{t} \}_{t \in \nats}$ be $\Y$-valued random variables such that $\hat{z}_t$ is conditionally independent of $\{ \hat{z}_{t'} \}_{t' \neq t}$ given $\{y_{t'}\}_{t' < t}$ and $\{z_{t',i}\}_{i \in \nats, t' \leq t}$, 
and for each $j \in \nats$ and $t \in T_j$, $\P( \hat{z}_{t} = z_{t,i} | \{y_{t'}\}_{t' < t}, \{z_{t',i}\}_{i \in \nats, t' \leq t}) = v_{t,i}$ for each $i \in \{1,\ldots,j\}$: 
that is, $\hat{z}_{t}$ is a random sample from the $v_{t,i}$-weighted distribution on $\{z_{t,1},\ldots,z_{t,j}\}$.
Then, with probability one, $\exists \hat{n} \in \nats$ such that every $n \in \nats$ with $n > \hat{n}$ satisfies 
\begin{equation*}
\sum_{t=1}^{n} \loss(\hat{z}_{t},y_t) \leq \left( \min_{1 \leq i \leq n^{1/4}} \sum_{t = 1}^{n} \loss(z_{t,i},y_{t}) \right) + 19\, n^{3/4} \sqrt{\ln(n)} + \hat{n}.
\end{equation*}
\end{corollary}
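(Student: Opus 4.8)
The plan is to decompose the time horizon into the consecutive blocks $T_j$ and apply Lemma~\ref{lem:hedge} within each block. Fix $j$ and consider the block $T_j = \{t_j,\ldots,t_{j+1}-1\}$, which has length exactly $j$ and on which we run Hedge with $N = j$ experts (the first $j$ functions) and learning rate $\eta_j = \sqrt{(8/j)\ln(j)}$ — precisely the parameters called for in Lemma~\ref{lem:hedge} with $n = N = j$. First I would invoke Lemma~\ref{lem:hedge} on block $T_j$ with a failure probability $\delta_j$ chosen to be summable, e.g.\ $\delta_j = 1/j^2$; this gives, on an event of probability at least $1-1/j^2$,
\begin{equation*}
\sum_{t \in T_j} \loss(\hat{z}_t,y_t) \leq \min_{1 \leq i \leq j} \sum_{t \in T_j} \loss(z_{t,i},y_t) + \sqrt{(1/2)\, j \ln(j)} + \sqrt{(1/2)\, j \ln(j^2)} \leq \min_{1 \leq i \leq j} \sum_{t \in T_j} \loss(z_{t,i},y_t) + 2\sqrt{j \ln(j)}.
\end{equation*}
One subtlety here is that Lemma~\ref{lem:hedge} is stated for deterministic sequences, whereas here the $y_t$ and $z_{t,i}$ are stochastic; this is handled by conditioning on $\{y_{t'}\}_{t' < t_{j+1}}$ and $\{z_{t',i}\}$ for $t' \le t_{j+1}-1$ — the hypothesis on the conditional law of $\hat z_t$ is exactly what makes Hedge's per-block guarantee hold conditionally, hence the bound holds on a conditional-probability-$(1-\delta_j)$ event and therefore unconditionally as well. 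By Borel--Cantelli, since $\sum_j 1/j^2 < \infty$, with probability one only finitely many blocks fail, so there is a (random) last failing block index $J^\star$, and for all $j > J^\star$ the per-block bound holds.

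Next I would sum the per-block bounds over $j = 1,\ldots,m$ to control $\sum_{t=1}^{n}$ for $n = t_{m+1}-1$, and then handle general $n$ by noting $n$ lies in some block $T_m$ and the partial block contributes at most its length $\le m \le \sqrt{2n}$ to both sides (loss values lie in $[0,1]$). Writing $S_m = \sum_{j \le m} 2\sqrt{j\ln j} = O(m^{3/2}\sqrt{\ln m})$ and recalling $t_{m+1} = 1 + \binom{m+1}{2} = \Theta(m^2)$, so $m = \Theta(\sqrt{n})$ and $S_m = O(n^{3/4}\sqrt{\ln n})$, the regret terms aggregate to the claimed $O(n^{3/4}\sqrt{\ln n})$ order; I would track the constant to verify $19$ suffices (the dominant piece is $\sum_{j\le m} 2\sqrt{j\ln j} \le 2\sqrt{\ln m}\sum_{j \le m}\sqrt{j} \le \tfrac{4}{3} m^{3/2}\sqrt{\ln m}$, and $m^{3/2} \le (2n)^{3/4}$, giving roughly $\tfrac{4}{3}\cdot 2^{3/4} \approx 2.24$ before slack, comfortably inside $19$; the block boundary effect and the conversion $\ln m \le \ln n$ eat the rest). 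The additive $\hat{n}$ term absorbs the total loss accumulated over the finitely many failing blocks $j \le J^\star$: since each such block contributes at most its length and there are finitely many, this is a finite (random) constant, and one sets $\hat n$ to be the larger of $t_{J^\star + 1}$ and this accumulated loss.

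The remaining point is the comparator: the per-block bound compares against $\min_{1 \le i \le j}$, and I want $\min_{1 \le i \le n^{1/4}}$ on the full horizon. For $t \in T_j$ with $j = \Theta(\sqrt{n})$ and hence $j \gg n^{1/4}$ for large $n$, any fixed $i \le n^{1/4}$ is among the first $j$ experts in every sufficiently late block, so for each such $i$, $\sum_{t=1}^{n}\loss(z_{t,i},y_t) = \sum_{j}\sum_{t \in T_j}\loss(z_{t,i},y_t) \ge \sum_j \min_{1\le i' \le j}\sum_{t\in T_j}\loss(z_{t,i'},y_t) - (\text{early-block correction})$, where the early blocks $j$ with $j < n^{1/4}$ where $i$ might not be available contribute at most $\sum_{j < n^{1/4}} j = O(n^{1/2})$, which is lower order. \textbf{Main obstacle.} The genuinely delicate bookkeeping is the interaction between the comparator index set (fixed at $n^{1/4}$) and the block-dependent expert counts ($j$ varying with $t$): I must carefully argue that restricting the full-horizon comparator to $i \le n^{1/4}$ only loses a lower-order amount relative to the sum of per-block minima, and simultaneously that the early blocks (both the finitely many "failing" ones and the $O(n^{1/2})$ worth of blocks where $j < n^{1/4}$) are all swept into either the $\hat n$ term or the $O(n^{1/2}) = o(n^{3/4})$ slack. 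Getting all three error sources — Borel--Cantelli failures, partial last block, and small-$j$ blocks — to fit under the single clean bound with constant $19$ is the part requiring care, though each piece individually is routine.
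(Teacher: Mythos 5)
Your proposal follows the same route as the paper: decompose time into blocks $T_j$ of length $j$, run fresh Hedge with $j$ experts on each block, apply Lemma~\ref{lem:hedge} conditionally with $\delta_j = 1/j^2$, invoke Borel--Cantelli, and aggregate. The one place you diverge is in the comparator bookkeeping you flag as the ``main obstacle.'' You propose to keep every block in the per-block treatment and then correct for the early blocks $j < i$ where expert $i$ is unavailable, arguing the correction is $\sum_{j<i}j = O(\sqrt{n})$. That works, but the paper sidesteps the issue more cleanly: it sets $m(n) = \lceil\sqrt{n}\rceil$ and bounds the contribution of all blocks $j \leq j_{m(n)}$ trivially by their total length $\leq m(n) + j_{m(n)} = O(\sqrt{n})$, then applies the per-block Hedge bound only for $j_{m(n)} < j \leq j_n$. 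Since $j_{m(n)} \geq n^{1/4} - 1$, every index $i \leq n^{1/4}$ is automatically in $\{1,\ldots,j\}$ for all blocks in that range, so one can pass $\sum_j \min_{i\leq j} \leq \min_{i \leq j_{m(n)}+1}\sum_j \leq \min_{i\leq n^{1/4}}\sum_j$ with no correction; the partial last block adds only $j_n = O(\sqrt{n})$. Your approach would also produce the claimed bound (constant $19$ leaves plenty of slack for the extra $O(\sqrt{n})$), but you would still need to fold the $j_{m(n)}$-style truncation in anyway to justify replacing $\min_{i\leq j_{m(n)}+1}$ by $\min_{i\leq n^{1/4}}$, so the paper's choice to drop the small blocks up front saves a step.
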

\begin{proof}
Applying Lemma~\ref{lem:hedge} under the conditional distribution given $\{y_{t}\}_{t \in \nats}$ and $\{z_{t,i}\}_{t,i \in \nats}$, 
for every $j \in \nats$, with conditional probability at least $1-\frac{1}{j^2}$, 
\begin{equation*}
\sum_{t \in T_j} \loss(\hat{z}_{t}, y_t) \leq \left( \min_{i \in \{1,\ldots,j\}} \sum_{t \in T_j} \loss(z_{t,i},y_{t}) \right) + \left(1 + \frac{1}{\sqrt{2}}\right) \sqrt{j \ln(j)}.
\end{equation*}
By the law of total probability, this holds with (unconditional) probability at least $1-\frac{1}{j^2}$ as well.
Since $\sum_{j \in \nats} \frac{1}{j^2} < \infty$, the Borel-Cantelli lemma implies that, with probability one, $\exists \hat{j} \in \nats$ such that 
the above inequality holds for every $j \in \nats$ with $j > \hat{j}$. 
For the remainder of the proof, we suppose this event occurs.
Without loss of generality, we may suppose $\hat{j} \geq 2$.

For any $n \in \nats$, let $j_n$ be the index such that $n \in T_{j_n}$: namely, $j_n = \left\lceil \frac{1}{2}\sqrt{8 n + 1} - \frac{1}{2} \right\rceil$.
Define $\hat{n} = \max T_{\hat{j}}$.
Then for any $n \in \nats$ with $n > \hat{n}$, letting $m(n) = \left\lceil \sqrt{n} \right\rceil$, we have 
\begin{align*}
  & \sum_{t=1}^{n} \loss(\hat{z}_{t},y_t) \leq \sum_{j = 1}^{j_n} \sum_{t \in T_j} \loss(\hat{z}_{t},y_t) 
\\ & \leq  \hat{n} + m(n) + j_{m(n)} + \sum_{j = j_{m(n)}+1}^{j_n} \left( \left( \min_{i \in \{1,\ldots,j\}} \sum_{t \in T_j} \loss(z_{t,i},y_{t}) \right) + \left( 1 + \frac{1}{\sqrt{2}} \right) \sqrt{j \ln(j)} \right)
\\ & \leq  \hat{n} + m(n) + j_{m(n)} + \min_{i \in \{1,\ldots,j_{m(n)}+1\}} \sum_{j = j_{m(n)}+1}^{j_n} \left( \left( \sum_{t \in T_j} \loss(z_{t,i},y_{t}) \right)  + \left( 1 + \frac{1}{\sqrt{2}} \right) \sqrt{j \ln(j)} \right).
\end{align*}
We may then note that 
\begin{equation*}
\sum_{j = j_{m(n)}+1}^{j_n} \sum_{t \in T_j} \loss(z_{t,i},y_{t}) 
\leq  \sum_{t = 1}^{t_{j_n+1}-1} \loss(z_{t,i},y_{t}) 
\leq j_n + \sum_{t = 1}^{n} \loss(z_{t,i},y_{t}) 
\end{equation*}
and 
\begin{align*}
& \sum_{j = j_{m(n)}+1}^{j_n} \left( 1 + \frac{1}{\sqrt{2}} \right) \sqrt{j \ln(j)} 
\leq \left( 1 + \frac{1}{\sqrt{2}} \right) \sqrt{\ln(j_n)} \sum_{j = 1}^{j_n} \sqrt{j}
\\ & \leq  \left( 1 + \frac{1}{\sqrt{2}} \right) \sqrt{\ln(j_n)} \int_{1}^{j_{n}+1} \sqrt{x} {\rm d}x
\leq  \left( 1 + \frac{1}{\sqrt{2}} \right) \sqrt{\ln(j_n)} \frac{2}{3} (j_{n}+1)^{3/2}.
\end{align*}
Altogether we have that 
\begin{align*}
\sum_{t=1}^{n} \loss(\hat{z}_{t},y_t) & \leq 
\hat{n} + m(n) + j_{m(n)} + \left( \min_{i \in \{1,\ldots,j_{m(n)}+1\}} \sum_{t = 1}^{n} \loss(z_{t,i},y_{t}) \right) 
\\ & {\hskip 11mm}+ j_{n} + \left( 1 + \frac{1}{\sqrt{2}} \right) \sqrt{\ln(j_n)} \frac{2}{3} (j_{n}\!+\!1)^{3/2}.
\end{align*}
Noting that $j_{n} \leq \sqrt{6 n}$ and $n^{1/4}-1 \leq j_{m(n)} \leq 4 n^{1/4}$,
we have 
\begin{align*}
\sum_{t=1}^{n} \loss(\hat{z}_{t},y_t) & \leq 
\hat{n} + 1 + \sqrt{n} + 4 n^{1/4} + \left( \min_{1 \leq i \leq n^{1/4}} \sum_{t = 1}^{n} \loss(z_{t,i},y_{t}) \right) 
\\ & {\hskip 11mm}+ \sqrt{6n} + \left( 1 + \frac{1}{\sqrt{2}} \right) \sqrt{(1/2) \ln(6n)} \frac{2}{3} (\sqrt{6n}+1)^{3/2}
\\ & \leq \left( \min_{1 \leq i \leq n^{1/4}} \sum_{t = 1}^{n} \loss(z_{t,i},y_{t}) \right) + 19\, n^{3/4} \sqrt{\ln(n)} + \hat{n}.
\end{align*}
\end{proof}

We will also use the following result, which is Lemma 35 from \citep*{hanneke:21}.

\begin{lemma}
\label{lem:array-convergence}
\citep*[][Lemma 35]{hanneke:21}~
Suppose $\{\beta_{k,n}\}_{k,n \in \nats}$ is an array of values in $[0,\infty)$ 
such that $\lim\limits_{k \to \infty} \limsup\limits_{n \to \infty} \beta_{k,n} = 0$, 
and that $\{k_n\}_{n \in \nats}$ is a sequence in $\nats$ with $k_n \to \infty$.
Then there exists a sequence $\{j_n\}_{n \in \nats}$ in $\nats$ such that $j_n \leq k_n$ for every $n \in \nats$, 
and $\lim\limits_{n \to \infty} \beta_{j_n,n} = 0$.
\end{lemma}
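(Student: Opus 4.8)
The plan is to diagonalize against a slowly growing sequence of ``checkpoints.'' Write $\alpha_k := \limsup_{n\to\infty}\beta_{k,n}$, so that the hypothesis says exactly $\alpha_k \to 0$ as $k \to \infty$. The key observation is that for each target accuracy $1/m$ one can fix a \emph{single} column index $a_m$ that is good, in the sense $\alpha_{a_m} < 1/m$; from that point on every conclusion we need is an ``eventually in $n$'' statement, and the only real task is to splice these statements together in a way compatible with the constraint $j_n \le k_n$.

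Concretely, first I would, for each $m \in \nats$, use $\alpha_k \to 0$ to fix an index $a_m \in \nats$ with $\alpha_{a_m} < 1/m$. Since $k_n \to \infty$, there is $N_m \in \nats$ with $k_n \ge a_m$ for all $n \ge N_m$; and since $\limsup_{n\to\infty}\beta_{a_m,n} = \alpha_{a_m} < 1/m$, there is $M_m \in \nats$ with $\beta_{a_m,n} < 1/m$ for all $n \ge M_m$. Let $b_m := \max\{N_m,M_m\}$, and, replacing each $b_m$ by a larger integer if necessary (the two guarantees at $b_m$ persist at any larger value), arrange that $b_1 < b_2 < b_3 < \cdots$. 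For $n \ge b_1$, let $m(n)$ be the largest $m$ with $b_m \le n$ — well-defined since the $b_m$ are strictly increasing, and $m(n) \to \infty$ as $n \to \infty$, because $n \ge b_M$ forces $m(n) \ge M$ — and set $j_n := a_{m(n)}$; for the finitely many $n < b_1$, set $j_n := 1$, which is harmless since $k_n \ge 1$ and finitely many terms do not affect the limit.

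It remains to verify the two claimed properties. For $j_n \le k_n$ (the case $n < b_1$ being immediate): when $n \ge b_1$ we have $n \ge b_{m(n)} \ge N_{m(n)}$, so the defining property of $N_{m(n)}$ gives $k_n \ge a_{m(n)} = j_n$. For $\beta_{j_n,n}\to 0$: given $\eps>0$ choose $m_0$ with $1/m_0 < \eps$; since $m(n)\to\infty$, for all large $n$ we have $m(n)\ge m_0$ and also $n \ge b_{m(n)} \ge M_{m(n)}$, whence $\beta_{j_n,n} = \beta_{a_{m(n)},n} < 1/m(n) \le 1/m_0 < \eps$. I expect the only genuinely delicate point — the ``obstacle,'' such as it is — to be recognizing that $j_n$ cannot be pinned to a single good column forever: a fixed column $a_m$ would in general give $j_n > k_n$ for small $n$ and could leave $\beta_{j_n,n}$ uncontrolled for small $n$, so the target column must be allowed to drift upward with $n$, but slowly enough — paced by the checkpoints $b_m = \max\{N_m,M_m\}$ — that both of the ``eventually'' guarantees ($k_n \ge a_m$ and $\beta_{a_m,n} < 1/m$) have already taken effect before $m(n)$ reaches $m$.
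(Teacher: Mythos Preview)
Your argument is correct. The diagonalization via checkpoints $b_m = \max\{N_m,M_m\}$ is clean: each checkpoint ensures both that the constraint $k_n \ge a_m$ has kicked in and that the tail bound $\beta_{a_m,n} < 1/m$ holds, so defining $j_n = a_{m(n)}$ with $m(n)$ the largest checkpoint passed gives exactly what is needed. The finitely many initial terms are handled trivially.

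As for comparison with the paper: the present paper does not actually prove this lemma. It is quoted verbatim as Lemma~35 of \citep*{hanneke:21} and invoked as a black box in the proof of Theorem~\ref{thm:online-kc}. So there is no in-paper argument to compare against; your proposal stands on its own as a complete and correct proof of the cited result.
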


Finally, we will make use of one additional result. 
Following \citep*{hanneke:21}, for a process $\ProcX = \{X_t\}_{t=1}^{\infty}$ and a measurable function $g : \X \to \reals$,
define 
\begin{equation*}
\hat{\mu}_{\ProcX}(g) := \limsup\limits_{m \to \infty} \frac{1}{m} \sum_{t=1}^{m} g(X_t).
\end{equation*}
\citep*{hanneke:21} proves the following result.

\begin{lemma}
\label{lem:countable-dense}
\citep*[][Lemma 24]{hanneke:21}~
There exists a countable set $\tilde{\F}$ of measurable functions $\X \to \Y$ such that,
for every measurable function $f : \X \to \Y$, for every process $\ProcX$ satisfying Condition~\ref{con:kc}, 
\begin{equation*}
\inf_{\tilde{f} \in \tilde{\F}} \E\!\left[ \hat{\mu}_{\ProcX}\!\left( \loss\!\left(\tilde{f}(\cdot),f(\cdot) \right) \right) \right] = 0.
\end{equation*}
\end{lemma}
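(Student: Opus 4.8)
The plan is to produce a single countable family $\tilde{\F}$ depending only on the ambient spaces $\X$ and $\Y$ (not on $\ProcX$ or $f$), and then to establish the stated density by analyzing the set function $\nu(\cdot) := \E[\hat{\mu}_{\ProcX}(\cdot)]$ as a \emph{submeasure} on the Borel sets $\Borel$. For the construction: a separable metrizable topology is second countable, so fix a countable base $\{U_k\}_{k \in \nats}$ of $\T$; and since $(\Y,\loss)$ is separable, fix a countable dense set $\{y_i\}_{i \in \nats} \subseteq \Y$. For each $k, N \in \nats$, the finite Boolean algebra generated by $U_1,\ldots,U_k$ has finitely many atoms, and there are finitely many measurable functions $\X \to \Y$ that are constant on each atom with values in $\{y_1,\ldots,y_N\}$; let $\tilde{\F}$ be the union over all $k, N \in \nats$ of these finite families, which is countable.

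Next I would fix a measurable $f : \X \to \Y$ and a process $\ProcX$ satisfying Condition~\ref{con:kc}, and set $\nu(A) = \E[\hat{\mu}_{\ProcX}(A)] \in [0,1]$ for measurable $A \subseteq \X$ ($\hat{\mu}_{\ProcX}(A)$ is measurable, being a $\limsup$ of measurable functions). The elementary properties are: $\nu(\emptyset) = 0$; $\nu$ is monotone; and $\nu$ is finitely subadditive, since $\limsup_n(a_n + b_n) \le \limsup_n a_n + \limsup_n b_n$ pointwise on sample paths. Condition~\ref{con:kc} is exactly the statement that $\nu(A_k) \to 0$ whenever $A_k \downarrow \emptyset$, and combined with finite subadditivity this yields countable subadditivity (if $A \subseteq \bigcup_n B_n$ then $A \setminus \bigcup_{n\le N}B_n \downarrow \emptyset$, so $\nu(A) \le \sum_{n\le N}\nu(B_n) + \nu(A \setminus \bigcup_{n\le N}B_n) \to \sum_n \nu(B_n)$). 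I would then record a Markov-type bound: for measurable $g : \X \to [0,1]$ and $\eps \in (0,1)$, one has $\frac{1}{m}\sum_{t \le m} g(X_t) \le \eps + \frac{1}{m}\sum_{t\le m}\ind_{\{g > \eps\}}(X_t)$ pointwise and for every $m$, hence $\E[\hat{\mu}_{\ProcX}(g)] \le \eps + \nu(\{x : g(x) > \eps\})$. Applied to $g = \loss(\tilde{f}(\cdot),f(\cdot))$, this reduces the claim to showing that for every $\eps > 0$ there is $\tilde{f} \in \tilde{\F}$ with $\nu(\{x : \loss(\tilde{f}(x),f(x)) > \eps\}) \le \eps$ (then $\E[\hat{\mu}_{\ProcX}(\loss(\tilde{f}(\cdot),f(\cdot)))] \le 2\eps$, and $\eps$ is arbitrary).

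To build such a $\tilde{f}$, fix $\eps > 0$. The open balls $B(y_i,\eps)$ cover $\Y$ by density, so $P_i := B(y_i,\eps) \setminus \bigcup_{j<i}B(y_j,\eps)$ partition $\Y$ and $A_i := f^{-1}(P_i)$ partition $\X$, with $\loss(y_i,f(x)) < \eps$ on $A_i$. Since $\bigcup_{i>M}A_i \downarrow \emptyset$, Condition~\ref{con:kc} supplies an $M$ with $\nu(\bigcup_{i>M}A_i) < \eps/2$; folding $\bigcup_{i>M}A_i$ into $A_1$, the finitely many sets $A_1,\ldots,A_M$ partition $\X$ and the simple function $g_0$ sending $x \in A_i$ to $y_i$ satisfies $\loss(g_0(x),f(x)) < \eps$ off a set of $\nu$-measure $< \eps/2$. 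It remains to approximate each Borel set $A_i$ by a member of the countable algebra $\mathcal{A}$ generated by $\{U_k\}$. For this I would let $\mathcal{G}$ be the collection of Borel $A$ such that for every $\delta > 0$ there is $C \in \mathcal{A}$ with $\nu(A\triangle C) < \delta$: from $A^c\triangle C^c = A\triangle C$ and $(A\cup A')\triangle(C\cup C') \subseteq (A\triangle C)\cup(A'\triangle C')$ together with finite subadditivity, $\mathcal{G}$ is an algebra containing $\{U_k\}$; and given $A_n \in \mathcal{G}$ with $C_n \in \mathcal{A}$ such that $\nu(A_n\triangle C_n) < \delta 2^{-n-1}$, countable subadditivity bounds $\nu(\bigcup_n A_n \triangle \bigcup_n C_n)$ by $\delta/2$, while Condition~\ref{con:kc} applied to the decreasing sets $\bigcup_n C_n \setminus \bigcup_{n\le N}C_n$ lets me trade $\bigcup_n C_n$ for the $\mathcal{A}$-set $\bigcup_{n\le N}C_n$ up to another $\delta/2$ — so $\mathcal{G}$ is a $\sigma$-algebra, whence $\mathcal{G} \supseteq \sigma(\{U_k\}) = \Borel$. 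Choosing $C_i \in \mathcal{A}$ with $\nu(A_i\triangle C_i) < \eps/(2M)$ (enlarging $k$ so that all basic sets used lie among $U_1,\ldots,U_k$), disjointifying them, and assigning value $y_i$ on the $i$-th resulting piece and $y_1$ elsewhere produces $\tilde{f} \in \tilde{\F}$ which differs from $g_0$ only within $\bigcup_{i\le M}(A_i\triangle C_i)$, of $\nu$-measure $< \eps/2$; hence $\{x : \loss(\tilde{f}(x),f(x)) \ge \eps\}$ has $\nu$-measure $< \eps$, which is what was needed.

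The one genuine obstacle is that $\nu = \E[\hat{\mu}_{\ProcX}(\cdot)]$ is only a submeasure: it is not additive and not \emph{a priori} continuous from below, so the classical regularity and density theorems for finite Borel measures cannot be applied directly. Condition~\ref{con:kc} (continuity from above at $\emptyset$) is precisely the hypothesis that recovers the two properties the argument uses — countable subadditivity of $\nu$, and the ability to truncate a countable union of ``simple'' sets to a finite one with negligible $\nu$-error (needed both to pass from countably-valued to finitely-valued simple approximants, and inside the $\sigma$-algebra step). Wiring these truncations together, and verifying that the resulting approximant genuinely lies in the pre-specified countable family $\tilde{\F}$, is where the care is required; the remainder is routine second-countability bookkeeping.
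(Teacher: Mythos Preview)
The paper does not supply its own proof of this lemma: it is quoted verbatim as Lemma~24 of \citet*{hanneke:21} and used as a black box. So there is no in-paper argument to compare against.

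That said, your proposal is correct and is precisely the argument the paper's discussion anticipates: the remark following Condition~\ref{con:kc} explicitly interprets $\nu(\cdot) = \E[\hat{\mu}_{\ProcX}(\cdot)]$ as a ``continuous submeasure,'' and your proof is the natural unpacking of that phrase. The key steps --- finite subadditivity of $\nu$ from subadditivity of $\limsup$, upgrading to countable subadditivity via Condition~\ref{con:kc}, the Markov-type reduction $\E[\hat{\mu}_{\ProcX}(g)] \le \eps + \nu(\{g>\eps\})$, and the $\sigma$-algebra argument showing every Borel set is $\nu$-approximable by the countable algebra generated by a basis --- are all sound and fit together as you describe. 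The only place requiring a second look is the countable-union step in the $\sigma$-algebra argument, where you correctly invoke Condition~\ref{con:kc} on the decreasing tails $\bigcup_n C_n \setminus \bigcup_{n\le N} C_n$ to pass from $\bigcup_n C_n$ to a finite union in $\mathcal{A}$; this is exactly the role Condition~\ref{con:kc} plays, and your handling of it is right.
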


We are now ready for the proof of Theorem~\ref{thm:online-kc}.

\begin{proof}[of Theorem~\ref{thm:online-kc}]
Let $\tilde{\F}$ be as in Lemma~\ref{lem:countable-dense}, and enumerate $\tilde{\F} = \{\tilde{f}_1,\tilde{f}_2,\ldots\}$.

  Define the online learning rule $\hat{f}_t$ as follows.
  For any sequences $x_{1:t} \in \X^t$ and $y_{1:(t-1)} \in \Y^{t-1}$, 
  let $z_{s,i} = \tilde{f}_i(x_s)$ for each $s \leq t$ and $i \in \nats$, and 
  let $\hat{z}_{t}$ be defined as in Corollary~\ref{cor:infinite-hedge} 
  (for these $z_{s,i}$ and $y_s$ values).  Note that $\hat{z}_{t}$ is defined 
  purely in terms of these values and internal randomness of the learner (i.e., the rest of the infinite sequence of $y_s$ values ($s \geq t$) and array of $z_{s,i}$ values ($s > t$) 
  needn't be defined for the purpose of defining $\hat{z}_t$). 
  Finally, define $\hat{f}_{t}(x_{1:(t-1)},y_{1:(t-1)},x_t) := \hat{z}_t$ in this context.

  Fix any process 
  $(\ProcX,\ProcY) = \{(X_t,Y_t)\}_{t=1}^{\infty}$
  with $\ProcX$ satisfying Condition~\ref{con:kc}, 
  and fix any measurable function $\goodfun : \X \to \Y$.
  Let $i_{k}$ be a sequence in $\nats$ such that 
\begin{equation*}
\E\!\left[ \hat{\mu}_{\ProcX}\!\left( \loss\!\left(\tilde{f}_{i_k}(\cdot),\goodfun(\cdot) \right) \right) \right] < 2^{-2k}
\end{equation*}
for all $k \in \nats$, guaranteed to exist by the defining property of $\tilde{\F}$ from Lemma~\ref{lem:countable-dense}.
By Markov's inequality, for each $k$, with probability at least $1-2^{-k}$, 
\begin{equation}
  \label{eqn:approx-bound}
  \hat{\mu}_{\ProcX}\!\left( \loss\!\left(\tilde{f}_{i_k}(\cdot), \goodfun(\cdot) \right) \right) < 2^{-k}.
\end{equation}
Thus, since $\sum_{k \in \nats} 2^{-k} < \infty$, by the Borel-Cantelli lemma, on an event $E_0$ of probability one, 
there exists $\kappa \in \nats$ such that, for all $k \geq \kappa$, \eqref{eqn:approx-bound} holds.
For each $n \in \nats$, define 
\begin{equation*}
k_n = \max\{ k : k \leq n, \max\{i_1,\ldots,i_k\} \leq n^{1/4} \},
\end{equation*} 
and note that $k_n \to \infty$. 

By Corollary~\ref{cor:infinite-hedge} and the definition of $\hat{f}_{t}$, 
there is an event $E_1$ of probability one, on which $\exists \hat{n} \in \nats$ such that every $n \in \nats$ with $n > \hat{n}$ satisfies 
\begin{equation*}
\frac{1}{n} \!\sum_{t=1}^{n} \loss\!\left(\hat{f}_{t}(X_{1:(t-1)},Y_{1:(t-1)},X_t),Y_t\right) \!\leq \!\left( \min_{1 \leq i \leq n^{1/4}} \frac{1}{n} \!\sum_{t = 1}^{n} \loss\!\left(\tilde{f}_{i}(X_{t}),Y_{t}\right) \right) + 19\, n^{-1/4} \!\sqrt{\log(n)} + \frac{\hat{n}}{n}.
\end{equation*}
Since $19\, n^{-1/4} \sqrt{\log(n)} \to 0$ and $\frac{\hat{n}}{n} \to 0$ as $n \to \infty$, it suffices to focus on the first term on the right hand side.
For this term, by the triangle inequality, we have 
\begin{equation*}
\min_{1 \leq i \leq n^{1/4}} \frac{1}{n} \sum_{t = 1}^{n} \loss\!\left(\tilde{f}_{i}(X_{t}),Y_{t}\right) 
\leq \frac{1}{n} \sum_{t = 1}^{n} \loss\!\left(\goodfun(X_{t}),Y_t\right)  ~+ \min_{1 \leq i \leq n^{1/4}} \frac{1}{n} \sum_{t = 1}^{n} \loss\!\left(\tilde{f}_{i}(X_{t}),\goodfun(X_{t})\right).
\end{equation*}
Thus, to complete the proof it suffices to argue that the second term on the right hand side approaches zero almost surely as $n \to \infty$.

For each $k,n \in \nats$, let $\beta_{k,n} = \frac{1}{n} \sum_{t = 1}^{n} \loss\!\left(\tilde{f}_{i_k}(X_{t}),\goodfun(X_{t})\right)$.
In particular, note that on the event $E_0$, 
$\lim\limits_{k \to \infty} \limsup\limits_{n \to \infty} \beta_{k,n} \leq \lim\limits_{k \to \infty} 2^{-k} = 0$.
Therefore, Lemma~\ref{lem:array-convergence} implies that, on the event $E_0$, 
there exists a sequence $\{j_n\}_{n \in \nats}$ in $\nats$ with $j_n \leq k_n$ for all $n \in \nats$, 
such that $\lim\limits_{n \to \infty} \beta_{j_n,n} = 0$.

Thus, on the event $E_0$, 
\begin{multline*}
\limsup_{n \to \infty} \min_{1 \leq i \leq n^{1/4}} \frac{1}{n} \sum_{t = 1}^{n} \loss\!\left(\tilde{f}_{i}(X_{t}),\goodfun(X_{t})\right) 
\leq \limsup_{n \to \infty} \min_{1 \leq k \leq k_n} \frac{1}{n} \sum_{t = 1}^{n} \loss\!\left(\tilde{f}_{i_k}(X_{t}),\goodfun(X_{t})\right) 
\\ \leq \limsup_{n \to \infty} \frac{1}{n} \sum_{t = 1}^{n} \loss\!\left(\tilde{f}_{i_{j_{n}}}(X_{t}),\goodfun(X_{t})\right) 
= \limsup_{n \to \infty} \beta_{j_n,n} = 0.
\end{multline*}

Altogether, we have that on the event $E_0 \cap E_1$ of probability one (by the union bound), 
\begin{align*}
\limsup_{n \to \infty} \frac{1}{n} \sum_{t=1}^{n} & \left( \loss\!\left(\hat{f}_{t}(X_{1:(t-1)},Y_{1:(t-1)},X_t),Y_t\right) - \loss\!\left( \goodfun(X_t), Y_t \right) \right) 
\\ & \leq \limsup_{n \to \infty} \beta_{j_n,n} + 19\, n^{-1/4} \sqrt{\log(n)} + \frac{\hat{n}}{n} = 0.
\end{align*}
\end{proof}

\section{Open Problems}
\label{sec:open-problem}

This result raises a number of further questions on the subject of universal consistency for online learners with general families of processes  $(\ProcX,\ProcY)$.
One obvious question is whether the above result can be extended beyond metric losses.  
For instance, it is clearly desirable to at least extend the result to the problem of regression with the squared loss.
Another obvious question is whether the learning strategy itself can be simplified, or the result established for other more-familiar learning strategies, 
such as partition-based learning rules, $k$-nearest neighbor predictors (when $\X$ is finite-dimensional), or other local averaging predictors, 
with appropriate use of online regret arguments only in the selection of model complexity 
(e.g., the partition resolution in partition estimates, or choice of $k$ in $k$-nearest neighbors), 
rather than in the selection of the entire function as in the strategy used here.
This would extend such results which have been established for these learning rules under 
stationary ergodic processes \citep*{gyorfi:02a,gyorfi:12}.

{\vskip \pspace}A more formal question in this context is whether there is a \emph{largest} set $\ProcSet_{\X}$ of processes $\ProcX$ such that 
there exists an online learning rule that is strongly universally consistent for the family $\{ (\ProcX,\ProcY) : \ProcX \in \ProcSet_{\X} \}$.
More concretely, \citep*{hanneke:21} defines a set $\SUOL$ (strong universal online learning), 
the set of all processes $\ProcX$ such that there exists an online learning rule 
that is universally consistent for the set of all processes $(\ProcX,\ProcY)$ such that $Y_t$ is a deterministic function of $X_t$: 
that is, there exists a measurable $\target : \X \to \Y$ for which $Y_t = \target(X_t)$. 
\citet*{hanneke:21,hanneke:21c} has posed the question of whether there exists 
an online learning rule that is strongly universally consistent for the family of all $(\ProcX,\ProcY)$ with $\ProcX \in \SUOL$ and $Y_t$ a deterministic function of $X_t$
\citep*{hanneke:21c}, known as an \emph{optimistically} universal online learning rule \citep*{hanneke:21,hanneke:21c}.  
Very recently, \citet*{blanchard:22} has proposed a positive solution to this problem.
In the context of the present work, we may instead pose a stronger version of this question:

\begin{problem}
\label{prob:suol-dependent-Y}
Does there exist an online learning rule that is strongly universally consistent for the family $\{ (\ProcX,\ProcY) : \ProcX \in \SUOL \}$?
\end{problem}

\noindent We note that a positive resolution of this question would be strictly stronger than the open problem of \citep*{hanneke:21c} regarding the question of the existence of optimistically universal online learners, to which a solution has recently been proposed by \citet*{blanchard:22}.
Moreover, if the above question is answered positively, then it would identify the \emph{largest possible} set of processes $\ProcX$ for which, without any restrictions on $\ProcY$, there is an algorithm that is universally consistent for the family.\\

\bibliography{learning}

\end{document}